\documentclass{article}

\usepackage{PRIMEarxiv}

\usepackage[utf8]{inputenc} 
\usepackage[T1]{fontenc}    
\usepackage{hyperref}       
\hypersetup{hidelinks}
\usepackage{url}           
\usepackage{booktabs}       
\usepackage{amsfonts}       
\usepackage{nicefrac}       
\usepackage{microtype}     
\usepackage{graphicx}       
\usepackage{subcaption}
\usepackage[symbol]{footmisc}
\usepackage{soul}
\usepackage{amsmath}
\usepackage{amsthm}
\usepackage{bm}
\usepackage{algorithm}
\usepackage{algorithmic}
\usepackage{cleveref}
\usepackage{multirow}
\usepackage[most]{tcolorbox}
\graphicspath{{media/}}

\AtBeginDocument{
  }
\providecommand{\Description}[1]{}

\definecolor{inputbg}{HTML}{F5F5F5}
\definecolor{inputframe}{HTML}{BDBDBD}
\definecolor{outputbg}{HTML}{E8F5E9}
\definecolor{outputframe}{HTML}{81C784}
\definecolor{errorbg}{HTML}{FFF3E0}
\definecolor{errorframe}{HTML}{FFB74D}

\newtcolorbox{inputbox}{
  colback=inputbg, colframe=inputframe,
  fonttitle=\bfseries\small, title=Input,
  boxrule=0.5pt, arc=1.5pt, left=4pt, right=4pt, top=2pt, bottom=2pt,
  before skip=4pt, after skip=0pt
}
\newtcolorbox{outputbox}[1][FISolver]{
  colback=outputbg, colframe=outputframe,
  fonttitle=\bfseries\small, title={Output (#1)},
  boxrule=0.5pt, arc=1.5pt, left=4pt, right=4pt, top=2pt, bottom=2pt,
  before skip=0pt, after skip=4pt
}
\newtcolorbox{errorbox}[1][SFT-only]{
  colback=errorbg, colframe=errorframe,
  fonttitle=\bfseries\small, title={Output (#1)},
  boxrule=0.5pt, arc=1.5pt, left=4pt, right=4pt, top=2pt, bottom=2pt,
  before skip=0pt, after skip=0pt
}

\newtheorem{example}{Example}
\newtheorem{theorem}{Theorem}
\newtheorem{definition}{Definition}
\newtheorem{proposition}{Proposition}
\theoremstyle{remark}
\newtheorem*{remark}{Remark}

\title{Learning First Integrals via Backward-Generated Data and Guided Reinforcement Learning}

\author{
  Jingfeng Zhong\thanks{Co-first authors.}\\
  Shanghai Jiao Tong University\\
  China\\
  \texttt{zhongjingfeng@sjtu.edu.cn}
  \And
  Zhengxiang Liu\footnotemark[1]\\
  Shanghai Jiao Tong University\\
  China\\
  \texttt{2312471604@sjtu.edu.cn}
  \And
  Zhijie Wang\\
  Shanghai Jiao Tong University\\
  China\\
  \texttt{violetevergarden@sjtu.edu.cn}
  \And
  Shuai Li\thanks{Corresponding author.}\\
  Shanghai Jiao Tong University\\
  China\\
  \texttt{shuaili8@sjtu.edu.cn}
}

\begin{document}
\maketitle

\begin{abstract}
The discovery of first integrals is of fundamental scientific importance for understanding conservation laws in dynamical systems. However, existing symbolic computation tools and Large Language Models (LLMs) remain limited on this task because high-quality training data are scarce and successful solutions often depend on mathematical intuition. This paper presents \textbf{FISolver}, an LLM-based solver developed to address this challenge. First, we introduce a ``Backward Generation'' algorithm that systematically builds large-scale datasets of (differential equation, first integral) pairs by deriving differential equations from sampled integrals, thereby alleviating the data scarcity bottleneck. Second, we apply supervised fine-tuning to a compact mathematical model and further improve its performance through reinforcement learning with a Levenshtein Distance-based shaped reward. In addition, we design data synthesis and blending strategies that support effective adaptation to difficult problem families from sparse examples. Experiments show that \textbf{FISolver}, while requiring substantially lower computational cost, significantly outperforms larger mathematical LLMs and commercial solvers such as Mathematica on challenging benchmarks, indicating a new data-driven route for automated discovery of first integrals.
\end{abstract}

\keywords{First Integrals, Scientific Knowledge Discovery, Large Language Models, Symbolic Reasoning, Data Generation, Dynamical Systems}

\section{Introduction}

The discovery of first integrals is of profound scientific significance. In dynamical systems, a first integral is a conserved quantity whose total derivative with respect to time is zero along the system's trajectories~\cite{arnold1992ordinary}. They provide implicit solutions, reduce the system order, and are directly linked to physical conservation laws~\cite{arnold1992ordinary}. Furthermore, they are central to modeling physical systems and related to breakthroughs in fundamental mathematical problems like Hilbert's sixteenth problem~\cite{ilyashenko2002centennial} and the three-body problem~\cite{poincare1893methodes}.

However, identifying first integrals remains a formidable challenge. The discovery process often relies on researchers' empirical insights rather than explicit logic. More precisely, finding a first integral $V$ requires searching over a large symbolic function space while satisfying the differential constraint $\frac{dV}{dt}=0$. Despite theoretical progress, existing computer algebra systems~\cite{braz2025new} struggle with this symbolic computation task because many algorithms rely on restricted ansatzes or search strategies that are effective for selected families but difficult to scale to more diverse expressions.

While the capabilities of Large Language Models (LLMs) in solving mathematical problems have advanced significantly, with some models rivaling human experts on specific benchmarks~\cite{yang2024formal,lewkowycz2022solving,trinh2024solving}, they still encounter difficulties with problems of great complexity or those lacking a fixed solution paradigm~\cite{yu2023metamath,lample2019deep}. First-integral discovery is one such problem: its solution process is difficult to formalize into explicit logical chains~\cite{davies2021advancing}, and large-scale verified training datasets are critically scarce~\cite{lample2019deep}. Nevertheless, pre-trained mathematical LLMs remain a suitable foundation for this task because they provide useful symbolic priors over mathematical notation, operator composition, and formula-like sequences, making them more promising than training a small task-specific generator from scratch. Consequently, the key question is not only whether LLMs can be applied to first-integral discovery, but how to provide them with sufficient correct symbolic supervision and a reliable verification mechanism.

To address these challenges, we introduce \textbf{FISolver}, an LLM-based solver designed to bridge this gap by establishing a new data-driven predict-and-verify paradigm. We demonstrate that fine-tuning specialized LLMs with sufficient verified data is a promising direction to directly identify first integrals. The central enabling technique is Backward Generation: instead of solving a differential system for an unknown integral, we sample first integrals and algebraically derive compatible differential equations. As shown in Figure~\ref{fig:main_results_bar}, FISolver substantially outperforms both larger mathematical LLMs and the commercial solver Mathematica on complex benchmarks, particularly on hard nonlinear systems where existing tools struggle. This work highlights the substantial potential of large language models in tackling complex symbolic mathematics when they are combined with scalable data generation and symbolic verification.

\begin{figure*}[t]
    \centering
    \includegraphics[width=0.95\textwidth]{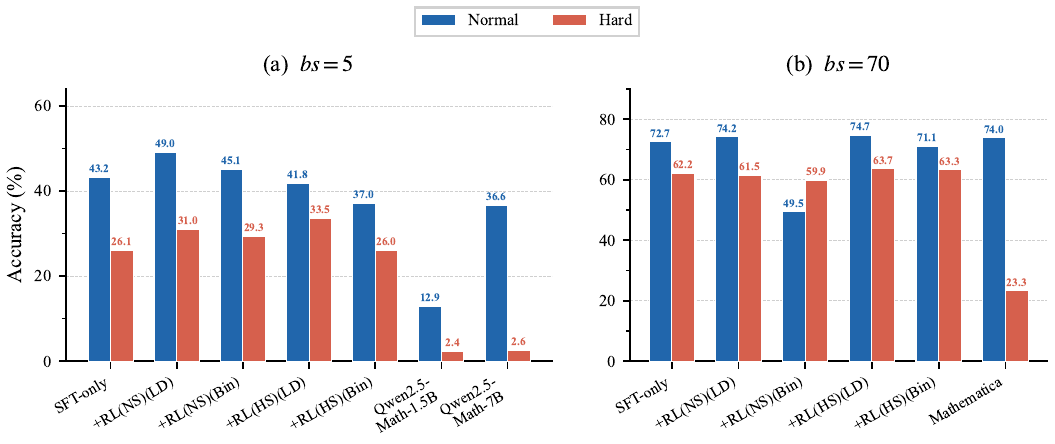}
    \caption{Accuracy of first integral prediction on the Normal and Hard test sets. FISolver (1.5B parameters) significantly outperforms the base model Qwen2.5-Math-1.5B, the larger Qwen2.5-Math-7B, and the commercial solver Mathematica. Reinforcement learning with Levenshtein Distance (LD) reward further improves accuracy. On the Hard set with beam size 70, FISolver achieves 63.7\%, nearly tripling Mathematica's 23.3\%.}
    \label{fig:main_results_bar}
\end{figure*}

This paper focuses on two-dimensional first-order ODE systems as a controlled and practically meaningful testbed. This scope includes all scalar second-order ODEs after the standard transformation $\dot{x}=y$, covers important system families studied by specialized first-integral solvers, and allows rigorous symbolic verification at scale. We view higher-dimensional and parameterized systems as important extensions and discuss their computational challenges in Section~\ref{sec:discussion}.

Our main contributions are as follows:

\begin{itemize}
    \item \textbf{Data Generation:} We propose a novel ``Backward Generation'' algorithm (\cref{sec:backward_gen}) that systematically constructs large-scale datasets of (differential equation, first integral) pairs. This approach effectively overcomes the critical bottleneck of data scarcity, while symbolic verification provides a correctness guarantee for the retained pairs.
    \item \textbf{Foundation Model Training:} Leveraging the generated dataset, we perform LoRA-based~\cite{hu2022lora} supervised fine-tuning on a compact mathematical LLM (\cref{sec:sft}). This yields a foundational solver that substantially outperforms larger-scale general mathematical LLMs on standardized benchmarks while maintaining low computational overhead.
    \item \textbf{Domain Adaptation:} We introduce data synthesis (\cref{sec:synthesis}) and dataset blending (\cref{sec:blending}) techniques. These methods enable the model to adapt effectively to specific dynamical systems using only a sparse set of examples, demonstrating strong transferability to specialized domains.
    \item \textbf{Reinforcement Learning Alignment:} We design a mathematically grounded reward function incorporating Levenshtein Distance-based shaping and employ the GRPO algorithm~\cite{shao2024deepseekmath} for reinforcement learning (\cref{sec:rl}). This structured guidance significantly enhances the model's symbolic reasoning capabilities beyond what is achievable with supervised learning alone.
\end{itemize}

\section{Related Work}

\paragraph{First Integrals Discovery.}
First integrals, also known as conserved quantities or constants of motion in many physical systems, are fundamental to the study of dynamical systems. The study of first integrals is fundamentally established by Noether's theorem ~\cite{noether1971invariant} and advanced by analytical methods including Prelle-Singer ~\cite{prelle1981elementary}, Lie symmetry ~\cite{bluman2002symmetry}, and Liouvillian functions ~\cite{singer1992liouvillian}. While recent algorithmic tools like InSyDE ~\cite{braz2025new} improve solutions for specific systems, they remain limited by search complexity.
In machine learning, neural networks have been successfully applied to approximate conserved quantities numerically ~\cite{kasim2022constants,greydanus2019hamiltonian,chen2021neural,cranmer2020lagrangian}. However, these methods typically rely on numerical regression or brute-force search to recover symbolic expressions ~\cite{ha2021discovering,liu2022machine}. Unlike these unsupervised or numerical approaches, we propose a supervised LLM framework for direct symbolic derivation.

\paragraph{LLMs for Mathematical Reasoning.}
The Transformer architecture ~\cite{vaswani2017attention} has enabled powerful mathematical LLMs like DeepSeekMath ~\cite{shao2024deepseekmath} and Qwen2.5-Math ~\cite{yang2024qwen2}. While effective for general problems, they struggle with specialized symbolic tasks. Recent works have applied Transformers to linear algebra and dynamical systems ~\cite{charton2023learning,charton2021linear,cai2024transforming,alfarano2024global}. Distinct from traditional symbolic regression ~\cite{udrescu2020ai,biggio2021neural} or general equation discovery ~\cite{shojaee2024llm,kahlmeyer2025discovering}, our work focuses specifically on the intuitive discovery of first integrals via large-scale pre-training.

\paragraph{Data Generation and RL.}
Our backward generation strategy aligns with reverse-engineering paradigms used in other mathematical domains ~\cite{lu2024mathgenie,alfarano2024global}. To address data scarcity, we leverage synthetic bootstrapping ~\cite{yu2023metamath,luo2023wizardmath,gou2023tora} and high-quality data blending ~\cite{zhou2023lima,dong2024abilities,gunasekar2023textbooks}. Furthermore, we enhance reasoning via Reinforcement Learning with process supervision ~\cite{lightman2023let}. Specifically, we employ Levenshtein Distance ~\cite{levenshtein1965binary} as a shaped reward, adapting sequence editing principles ~\cite{ranzato2015sequence,petersen2019deep,paassen2018tree} to ensure symbolic precision in mathematical discovery.

\section{Preliminaries}

We consider a system of first-order $n$-dimensional differential equations
\begin{equation}
\frac{d\bm{x}}{dt}=\bm{f}(t,\bm{x})
\label{eqn1}
\end{equation}
where $\bm{f} \in C^{1}(D, \mathbb{R}^n), \;(t,\bm{x})\in D \subset \mathbb{R}^{1+n}$.

\begin{definition}[First Integral, \cite{arnold1992ordinary}]
A first integral of system (\ref{eqn1}) is a continuously differentiable function $V: D \to \mathbb{R}$ that is non-constant on any open subset of $D$, such that its total derivative vanishes along any solution trajectory $\bm{x}(t)$:
\begin{equation}
\frac{d}{dt}V(t,\bm{x}(t)) = \frac{\partial V}{\partial t} + \nabla_{\bm{x}}V \cdot \bm{f}(t, \bm{x}) = 0.
\label{eqn2}
\end{equation}
\end{definition}

Equivalently, for any particular solution $\bm{x}=\bm{x}(t)$ of the system (\ref{eqn1}), the function $V$ remains constant along that solution trajectory, i.e., $V(t,\bm{x}(t))\equiv C$ for some constant $C$. In two-dimensional cases, we often denote the state vector as $\bm x = (x,y)$ for clarity.

\begin{example}[Harmonic oscillator]
For $\dot{x}=y$ and $\dot{y}=-x$, the function $V=x^2+y^2$ is a first integral because
$\frac{dV}{dt}=2x\dot{x}+2y\dot{y}=2xy-2xy=0$.
It represents conservation of the oscillator's energy up to a constant factor.
\end{example}

\begin{theorem}[Local Existence of First Integrals, \cite{arnold1992ordinary}]
For any non-critical point $P_0 \in D$ of the system \eqref{eqn1}, there exists a neighborhood $U_0 \subset D$ in which the system possesses exactly $n$ functionally independent first integrals.
\end{theorem}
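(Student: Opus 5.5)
The plan is to prove the statement by straightening the flow near $P_0$, working in the extended phase space $D\subset\mathbb{R}^{1+n}$ with the autonomous vector field $\bm{F}(t,\bm{x})=(1,\bm{f}(t,\bm{x}))$. Because the first component of $\bm{F}$ is identically $1$, the field never vanishes, so every point of $D$ is non-critical for it. By \eqref{eqn2}, a function $V$ is a first integral exactly when $\bm{F}\cdot\nabla_{(t,\bm{x})}V=0$, that is, when $V$ is constant along the integral curves of $\bm{F}$. I will therefore build $n$ coordinates that are constant along these curves and check that they are independent.

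\textbf{Construction.} Write $P_0=(t_0,\bm{x}_0)$. Since $\bm{f}\in C^1$, the Picard--Lindel\"of theorem together with $C^1$ dependence on initial data gives a solution map $\bm{x}=\bm{\varphi}(t;t_0,\bm{\xi})$, defined and $C^1$ on a neighborhood of $(t_0,t_0,\bm{x}_0)$, with $\bm{\varphi}(t_0;t_0,\bm{\xi})=\bm{\xi}$. Define $\Phi(t,\bm{\xi})=(t,\bm{\varphi}(t;t_0,\bm{\xi}))$. Its Jacobian at $(t_0,\bm{x}_0)$ is the identity, because $\partial\bm{\varphi}/\partial\bm{\xi}=I$ at $t=t_0$. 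The inverse function theorem then gives a neighborhood $U_0$ of $P_0$ on which $\Phi$ has a $C^1$ inverse $(t,\bm{x})\mapsto(t,\bm{\psi}(t,\bm{x}))$. Here $\bm{\psi}(t,\bm{x})$ is the initial value at time $t_0$ of the solution passing through $(t,\bm{x})$.

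\textbf{Verification.} I take $V_i=\psi_i$ for $i=1,\dots,n$. Along a solution, uniqueness gives $\bm{\psi}(t,\bm{x}(t))=\bm{\xi}$, which is constant, so \eqref{eqn2} holds. The functions are independent because $\partial\bm{\psi}/\partial\bm{x}$ equals $I$ at $P_0$; shrinking $U_0$ if necessary keeps this matrix invertible, so $\nabla V_1,\dots,\nabla V_n$ are linearly independent throughout $U_0$. This independence also implies that each $V_i$ is non-constant on every open subset, as the definition requires. Shrinking $U_0$ to a box in the straightened coordinates also makes each level set of $\bm{\psi}$ a single connected trajectory piece.

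\textbf{Upper bound on the number.} To show ``exactly $n$'', suppose $W$ is any first integral on $U_0$. In the coordinates $(t,\bm{\xi})$, \eqref{eqn2} becomes $\partial_t(W\circ\Phi)=0$. Since $U_0$ is a box in these coordinates, $W\circ\Phi$ depends on $\bm{\xi}$ alone, so $W=G(V_1,\dots,V_n)$ for some function $G$. Consequently $\nabla W$ lies in the span of the $\nabla V_i$, and no $n+1$ first integrals can be functionally independent. An equivalent argument: every gradient of a first integral lies in the $n$-dimensional annihilator of $\bm{F}\neq0$.

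\textbf{Main obstacle.} The delicate step is the regularity and domain bookkeeping. I must ensure that $\bm{\varphi}$ is $C^1$ in $\bm{\xi}$, which follows from the variational equation. I must also choose $U_0$ small enough that the inverse function theorem applies and the straightened neighborhood is a product, so that ``$\partial_t=0$ implies independence of $t$'' is valid.
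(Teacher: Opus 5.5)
Your argument is correct and is essentially the standard proof behind the paper's citation. The paper gives no proof of its own and defers to Arnold, who gets the $n$ time-dependent first integrals by rectifying the nowhere-vanishing field $(1,\bm{f})$ in extended phase space using the flow from the transversal section $t=t_0$, exactly as you do; the upper bound likewise comes from every first integral being a function of the rectifying coordinates. One small slip: the Jacobian of $\Phi$ at $(t_0,\bm{x}_0)$ is not the identity but the block lower-triangular matrix with diagonal blocks $1$ and $I$ and off-diagonal block $\bm{f}(t_0,\bm{x}_0)$, which still has determinant $1$, so the inverse function theorem applies as you claim (and $\partial\bm{\psi}/\partial\bm{x}=I$ at $P_0$ remains true).
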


This theorem implies that at most $n$ functionally independent first integrals could be found in an $n$-dimensional system.

\begin{theorem}[General Solution via First Integrals, \cite{arnold1992ordinary}]
\label{theorem2}
Suppose $V_1(t,\bm{x}), \dots, V_n(t,\bm{x})$  are $n$ functionally independent first integrals of (\ref{eqn1}) in a neighborhood of a non-critical point. According to the Implicit Function Theorem, the solution $\bm{x} = \phi(t, \bm{C})$ obtained from the algebraic equations

$$
(V_1(t,\bm{x}), \dots, V_n(t,\bm{x})) = (C_1, \dots, C_n)
$$

constitutes the general solution of (\ref{eqn1}), where $\bm{C}$ is a vector of arbitrary constants.
\end{theorem}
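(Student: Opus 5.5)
The plan is to show two things. First, the implicit-function map $\bm{x}=\phi(t,\bm{C})$ is well defined and solves \eqref{eqn1} for every fixed $\bm{C}$ near $\bm{C}_0 = (V_1,\dots,V_n)(t_0,\bm{x}_0)$. Second, every solution through the neighborhood arises this way. Throughout, "functionally independent" will be read in the standard local sense that the Jacobian $J=\partial(V_1,\dots,V_n)/\partial(x_1,\dots,x_n)$ is nonsingular at $P_0=(t_0,\bm{x}_0)$. This is the reading needed to solve for $\bm{x}$ rather than some other set of variables; if only the full rank in $(t,\bm{x})$ were assumed, I would first reduce to this case via the first-integral identity \eqref{eqn2}.

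\textbf{Step 1: well-posedness.} Since each $V_i$ is $C^1$ and $\det J(P_0)\neq 0$, the Implicit Function Theorem gives neighborhoods $I\times W$ of $(t_0,\bm{C}_0)$ and $U$ of $\bm{x}_0$, and a unique $C^1$ map $\phi:I\times W\to U$ with $V(t,\phi(t,\bm{C}))=\bm{C}$. Moreover, $(t,\bm{x})\mapsto(t,V(t,\bm{x}))$ is a local $C^1$ diffeomorphism.

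\textbf{Step 2: $\phi$ solves the system.} Differentiate the identity $V_i(t,\phi(t,\bm{C}))=C_i$ in $t$ to get
\[
\partial_t V_i + \nabla_{\bm{x}}V_i\cdot \partial_t\phi = 0 .
\]
Subtracting the first-integral identity \eqref{eqn2}, which holds pointwise on $D$ (the standard meaning, since trajectories fill $D$), yields
\[
J\,(\partial_t\phi - \bm{f}(t,\phi))=0 .
\]
Invertibility of $J$ then forces $\partial_t\phi=\bm{f}(t,\phi)$. This is the step I regard as the heart of the proof. The main subtlety is justifying that \eqref{eqn2} holds as an identity in $(t,\bm{x})$ and not merely along trajectories. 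This follows because $\bm{f}\in C^1$, so through every point of $D$ passes a solution, and evaluating at $t$ on that solution gives the pointwise identity.

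\textbf{Step 3: generality.} Let $\bm{x}(t)$ be any solution with $(t_0,\bm{x}(t_0))$ in the neighborhood, and set $\bm{C}=V(t_0,\bm{x}(t_0))$. Because the $V_i$ are first integrals, $V(t,\bm{x}(t))\equiv\bm{C}$ as long as the solution stays in $U$. Uniqueness in the Implicit Function Theorem then gives $\bm{x}(t)=\phi(t,\bm{C})$. Alternatively, since $\phi(\cdot,\bm{C})$ is also a solution with the same initial value, one can invoke Picard–Lindelöf uniqueness, which applies because $\bm{f}\in C^1$.

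Finally, the map $\bm{C}\mapsto\phi(t_0,\bm{C})$ is a local diffeomorphism onto a neighborhood of $\bm{x}_0$. Hence the $n$ constants are essential and every initial condition near $\bm{x}_0$ is attained, which is exactly the statement that $\phi(t,\bm{C})$ is the general solution.
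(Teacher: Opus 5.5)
Your proof is correct. The paper gives no proof of its own: it quotes the result from Arnold and indicates only that $\phi$ comes from the Implicit Function Theorem. Your argument is the standard textbook proof behind that citation: the IFT gives $\phi$; differentiating $V(t,\phi(t,\bm C))=\bm C$ and subtracting the pointwise form of \eqref{eqn2} gives $J(\partial_t\phi-\bm f)=0$; and uniqueness gives generality. Your reduction from full rank in $(t,\bm x)$ to $\det J\neq 0$ via $\partial_t V=-J\bm f$ is also right.

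Two details are worth stating explicitly: shrink the neighborhoods so that $J$ stays invertible at every $(t,\phi(t,\bm C))$, and so that $V$ maps the chosen neighborhood of $P_0$ into $W$. The non-critical hypothesis needs no argument, since the extended field $(1,\bm f)$ never vanishes.
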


This theorem establishes the fundamental connection between first integrals and solutions of a system: first integrals encode the implicit structure of solutions, and conversely, a complete set of first integrals fully determines the system's solution family.

\section{Method}

We propose FISolver, which learns to map a symbolic differential system directly to a first integral, while using symbolic verification to ensure correctness. To address the scarcity of verified training pairs, we combine solver-based forward generation with backward generation, which constructs differential systems from prescribed first integrals. Overall, FISolver comprises three main stages: (1) \textbf{data generation}, (2) \textbf{model training}, and (3) \textbf{inference and verification}, as illustrated in Figure~\ref{fig:first-integrals}.

\begin{figure*}
    \centering
    \includegraphics[width=1.0\textwidth]{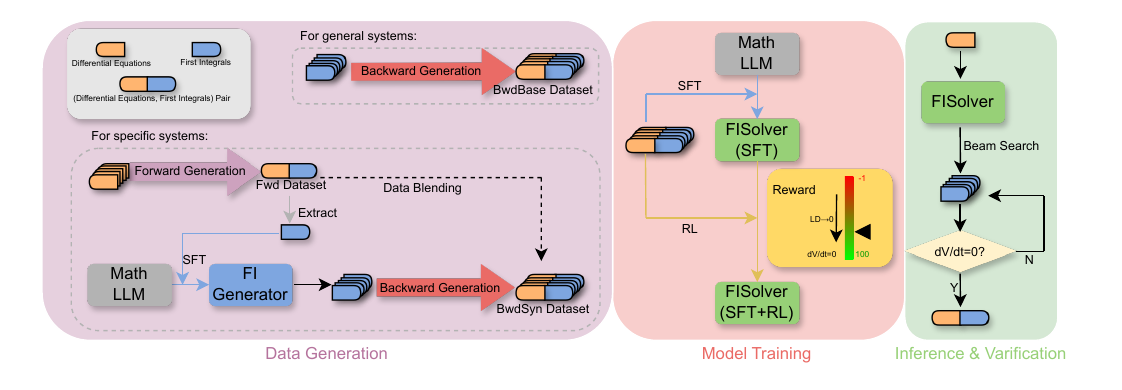}    \caption{FISolver comprises three main stages: (1) \textbf{data generation}, (2) \textbf{model training}, and (3) \textbf{inference and verification}.  \textbf{Data Generation:} We employ two complementary approaches. For general systems, we randomly generate first integrals and apply backward-generation to construct the BwdBase dataset, which trains models capable of solving diverse first integrals. For specialized systems, we leverage a small set of known first integrals (e.g., from forward-generation using specialized solving tools) to fine-tune a mathematical LLM, yielding a generative model FIGenerator that produces large quantities of synthetic first integrals. These are then inversely transformed via backward-generation to create the BwdSyn dataset, optionally augmented through data blending with forward samples. \textbf{Model Training:} We first perform supervised fine-tuning (SFT) on a pre-trained mathematical LLM, followed by reinforcement learning (RL) using a Levenshtein Distance (LD)-based reward function. \textbf{Inference and Verification:} Given a differential system, we use beam search to generate multiple candidate first integrals, which are then individually verified until a correct solution is identified.}
    \label{fig:first-integrals}
    \Description{Overview of the FISolver pipeline, including backward data generation, model training, and inference-time verification.}
\end{figure*}

\subsection{Data Generation}

\subsubsection{Forward Generation}

Some existing first integral solving tools can solve differential equations that meet specific criteria~\cite{braz2025new}. We can leverage these tools to generate (differential equations, first integrals) pairs by first randomly generating systems of differential equations that satisfy the tool's solution conditions. If the tool successfully solves the system, we record the resulting pair as a data sample.

The data obtained using this method is often limited in quantity and belongs to a specific system. However, as long as these small amounts of data are fully utilized with some special techniques, they can greatly enhance the model's solving ability on specific systems.

\subsubsection{Backward Generation}
\label{sec:backward_gen}

The core idea of backward generation is to derive systems of differential equations from given first integrals. This section first introduces the algebraic principle of backward generation and then describes how the resulting pairs are filtered and used in later synthesis and blending stages.

Given $m$ functionally independent first integrals $(1\le m\le n)$ and $n-m$ non-constant differential equations, we can generate a pair consisting of a complete differential system and its $m$ prescribed first integrals as follows.

We define $\bm{V}=(V_1,V_2,...,V_m)^{\top}$ as the vector that includes all the given first integrals that satisfies Equation (\ref{eqn2}). $\bm{x}_a=(x_1,x_2,...,x_{n-m})^{\top}$ stands for variables whose differential equations are known and $\bm{x}_b=(x_{n-m+1},x_{n-m+2},...,x_n)^{\top}$ stands for the complementary set of variables. 
Let
$J_a=\frac{\partial \bm{V}}{\partial \bm{x}_a}\in\mathbb{R}^{m\times(n-m)}$
and
$J_b=\frac{\partial \bm{V}}{\partial \bm{x}_b}\in\mathbb{R}^{m\times m}$
denote the Jacobian matrices under the convention $(J_a)_{ij}=\partial V_i/\partial (x_a)_j$ and $(J_b)_{ij}=\partial V_i/\partial (x_b)_j$.
Then Equation~(\ref{eqn2}) can be written as

\begin{equation}
J_a\frac{d\bm{x}_a}{dt}
+J_b\frac{d\bm{x}_b}{dt}
+\frac{\partial\bm{V}}{\partial t}
=\bm{0}.
\label{eqn4}
\end{equation}

If matrix $J_b$ is invertible, the unknown differential equations can be uniquely determined by the following formula derived from Equation (\ref{eqn4}):

\begin{equation}
\frac{d\bm{x}_b}{dt}
=-J_b^{-1}
\left(
J_a\frac{d\bm{x}_a}{dt}
+\frac{\partial\bm{V}}{\partial t}
\right).
\label{eqn5}
\end{equation}

\begin{proposition}[Correctness of Backward Generation]
\label{prop:correctness}
Let $V_1, \dots, V_m$ be continuously differentiable functions, let $\frac{d\bm{x}_a}{dt}$ be given, and let $\frac{d\bm{x}_b}{dt}$ be any solution of the linear system~\eqref{eqn4}. Then $V_1, \dots, V_m$ are first integrals of the resulting complete system, i.e., $\frac{dV_i}{dt} \equiv 0$ for all $i = 1, \dots, m$.
\end{proposition}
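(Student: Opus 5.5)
The plan is a direct verification by the chain rule: the total derivative of each $V_i$ along the assembled system is exactly the $i$-th row of the left-hand side of \eqref{eqn4}, and that row vanishes by the choice of $\frac{d\bm{x}_b}{dt}$.

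First, fix the complete system. It is $\frac{d\bm{x}}{dt}=\bm{f}(t,\bm{x})$ with $\bm{f}=(\bm{f}_a,\bm{f}_b)$. The component $\bm{f}_a$ is the prescribed right-hand side for $\bm{x}_a$. The component $\bm{f}_b(t,\bm{x})$ is any pointwise solution of the linear system \eqref{eqn4}, namely $J_a\bm{f}_a+J_b\bm{f}_b+\frac{\partial\bm{V}}{\partial t}=\bm{0}$, with the Jacobians evaluated at $(t,\bm{x})$. When $J_b$ is invertible this is \eqref{eqn5}.

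Next, compute $\frac{dV_i}{dt}$ using the definition \eqref{eqn2}. Split $\nabla_{\bm{x}}V_i\cdot\bm{f}$ into its $\bm{x}_a$ block and its $\bm{x}_b$ block. The $\bm{x}_a$ part equals $\sum_j (J_a)_{ij}(\bm{f}_a)_j$, which is the $i$-th entry of $J_a\bm{f}_a$. The $\bm{x}_b$ part similarly equals the $i$-th entry of $J_b\bm{f}_b$. So, stacking over $i=1,\dots,m$,
\begin{equation*}
\left(\frac{dV_1}{dt},\dots,\frac{dV_m}{dt}\right)^{\top}=\frac{\partial\bm{V}}{\partial t}+J_a\bm{f}_a+J_b\bm{f}_b .
\end{equation*}
This vanishes identically by the defining property of $\bm{f}_b$. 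Along any trajectory $\bm{x}(t)$, the ordinary chain rule gives $\frac{d}{dt}V_i(t,\bm{x}(t))$ as this same expression evaluated at $(t,\bm{x}(t))$, so each $V_i$ is constant on solutions.

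The one point needing care is a regularity and bookkeeping issue; the calculation itself is routine. The identity $\frac{dV_i}{dt}\equiv0$ is the core content and holds regardless. But the statement calls the $V_i$ first integrals, and the definition also requires them to be $C^1$ and non-constant on open sets. $C^1$ is assumed. Non-constancy is implicit in taking functionally independent $V_i$, and I would note it as a standing hypothesis from the setup. I would also remark that $\bm{f}$ should be $C^1$ for the system to fit \eqref{eqn1}, which holds, for example, if the $V_i$ are $C^2$ and $J_b$ is invertible in \eqref{eqn5}. Even without that regularity, the vanishing of the total derivative follows from the computation above.
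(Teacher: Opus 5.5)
Your proposal is correct and is the paper's proof: expand each $\frac{dV_i}{dt}$ by the chain rule, stack the components to get $\frac{\partial\bm{V}}{\partial t}+J_a\frac{d\bm{x}_a}{dt}+J_b\frac{d\bm{x}_b}{dt}$, and note that this vanishes identically by \eqref{eqn4}. Your additional remarks are reasonable caveats that the paper leaves implicit. They concern non-constancy of the $V_i$ on open sets and $C^1$ regularity of the assembled right-hand side, and the identity $\frac{dV_i}{dt}\equiv 0$ itself does not depend on them.
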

\begin{proof}
Expanding $\frac{dV_i}{dt}$ via the chain rule (cf.\ Definition~1) and stacking all $m$ components yields
$\frac{d\bm{V}}{dt} = \frac{\partial \bm{V}}{\partial t} + J_a \frac{d\bm{x}_a}{dt} + J_b \frac{d\bm{x}_b}{dt}$,
which vanishes identically because $\frac{d\bm{x}_b}{dt}$ satisfies Equation~\eqref{eqn4}.
\end{proof}

\begin{remark}
Equation~\eqref{eqn4} determines $\frac{d\bm{x}_b}{dt}$ uniquely whenever $\det(J_b) \neq 0$, i.e., whenever the prescribed functions $V_1, \dots, V_m$ satisfy the local solvability condition of the Implicit Function Theorem with respect to the variables $\bm{x}_b$.
\end{remark}

Equation~(\ref{eqn5}) is used only to state the algebraic principle. In the actual construction process, we do not explicitly compute the symbolic matrix inverse. Instead, we treat Equation~(\ref{eqn4}) as a linear symbolic system in the unknown derivatives $\frac{d\bm{x}_b}{dt}$ and solve it with a symbolic solver (as detailed in Section~\ref{sec:datasets}). If the system has no symbolic solution, the sampled seed is discarded. If it is underdetermined, we keep a concrete solution only after instantiating the free components according to the sampling rules and verifying that all prescribed $V_i$ satisfy $\frac{dV_i}{dt}\equiv0$. This design improves robustness to singular or degenerate cases and is also computationally preferable, because explicit symbolic inversion tends to be substantially more expensive and can cause severe expression swell as the system dimension grows.

The proposed inversion framework introduces intrinsic flexibility into the data generation process. The seed components—comprising $m$ first integrals and $n-m$ explicit differential equations—can be sampled either stochastically from a general distribution to maximize diversity, or according to specific structural constraints to target specialized system families (as detailed in Section~\ref{sec:synthesis}).

Formally, the generation pipeline operates in three stages. First, we sample the initial set of first integrals and differential equations. Second, the full system is derived via the inversion logic defined in Equation (\ref{eqn5}). Finally, to ensure the generated samples possess sufficient dynamical complexity, we impose a non-triviality constraint: we reject any system containing decoupled variables, defined as equations where $\frac{dx_i}{dt} = f_i(x_i, t)$. This filtering step eliminates trivial univariate subsystems that provide little information about coupled dynamics.

\subsubsection{Data Synthesis}
\label{sec:synthesis}

While the random backward-generation procedure yields diverse data, it may lack coverage of specific structured families. To address this, we propose a data-synthesis strategy that leverages sparse examples to expand the training corpus, targeting two practical scenarios:

\begin{enumerate}
    \item \textbf{Low-Resource Adaptation.} For systems with only a few known first integrals, we use these sparse examples to fine-tune a first integral generator. This generator learns the underlying structural distribution to synthesize a large-scale corpus of similar expressions, transforming a few-shot problem into a data-rich supervised task.

    \item \textbf{Solver Enhancement.} When a specialized solver exists but has limited capability, we use it to generate a small set of ``seed'' solutions. By repeating the process in Scenario 1, we can train a LLM-based solver that generalizes beyond the original tool's limitations. This allows our model to discover first integrals for instances that the original solver could not handle.
\end{enumerate}

Both scenarios follow the same pipeline: learning from sparse data to synthesize massive pairs, thereby bridging the gap between existing knowledge and generalizable discovery.

The core idea is to train a generative mathematical model that produces a large number of first integrals resembling the available exemplars. Concretely, we fine-tune a mathematical LLM to act as a synthetic first-integral generator. The generator is trained using the fixed prompt \textit{``Generate a complex mathematical expression that satisfies a specific unknown rule''} as input, with the known first integrals serving as the ground-truth targets. To reduce computational cost and memory footprint we apply Low-Rank Adaptation (LoRA) and manage distributed mixed-precision training.

The complete synthesis pipeline is:
\begin{enumerate}
    \item Train a generative model that produces first integrals similar in form to the available exemplars.
    \item Sample $m$ synthetic first integrals from the generator for each target synthetic instance ($m$ depends on how many integrals are intended to be prescribed).
    \item Generate the remaining $n-m$ differential equations as required by the application specification.
    \item Use Equation (\ref{eqn4}) to solve for the remaining differential equations.
    \item Record the resulting (differential equations, first integrals) pairs and add them to the training corpus.
\end{enumerate}

Note that the synthetically generated differential systems may not satisfy every constraint of the original domain; nevertheless, in practice this synthesized data substantially improves the model's ability to solve the target family of systems.

Notably, this approach differs fundamentally from naive synthetic data generation where the model directly learns to imitate complete (differential equations, first integrals) pairs. Unlike such methods, which cannot guarantee correctness of generated samples, our tokenization combined with the backward-generation algorithm ensures the correctness of all synthesized data.

\subsubsection{Dataset Blending}
\label{sec:blending}

To further improve model performance in specific structured systems, we blend sparse forward-generated samples into the large-scale backward-generated dataset. This strategy is essential for distributional alignment: while backward generation provides broad symbolic diversity and verified supervision, its distribution is induced by the sampling process over first integrals and seed equations. Consequently, some structured systems may contain expression lengths, operator co-occurrences, or equation patterns that are underrepresented in the backward-generated corpus. Forward-generated samples act as distributional anchors for these target patterns. Since the model has already acquired fundamental symbolic reasoning from the synthetic corpus, exposure to even limited forward data allows it to adapt its internal representations toward the target distribution, effectively bridging the gap between large-scale synthetic training and structured-system discovery. In this sense, blending acts as a lightweight domain-adaptation mechanism rather than merely increasing the training-set size.

\subsection{Model Training}

\subsubsection{Supervised Fine-Tuning}
\label{sec:sft}

Instead of training a model from scratch, we employ LoRA-based supervised fine-tuning on a pre-trained, small-scale mathematical LLM. This approach is computationally efficient and allows the model to leverage its existing mathematical priors while adapting to the specific symbolic patterns of first integrals. During fine-tuning, the model is tasked with predicting ground-truth integrals from differential equations under a specific prompt. Depending on the composition of the training set, the resulting model can either be a general-purpose solver for diverse first integrals or a specialized solver optimized for particular classes of systems.

\subsubsection{Reinforcement Learning}
\label{sec:rl}

The RL training is initiated from the SFT model using the GRPO algorithm, with the same LoRA configuration for efficiency. Unlike SFT, which minimizes token-level cross-entropy against one reference expression, RL assigns sequence-level rewards after parsing and symbolic verification. This is important because many failed predictions are locally plausible and differ from a valid first integral by only a few tokens or operators, yet still violate $\frac{dV}{dt}\equiv0$. Therefore, instead of relying only on sparse binary rewards, we use Levenshtein-distance shaping to provide intermediate guidance toward verifiable first integrals.

The success of the RL process relies on a robust and mathematically grounded reward function, $R\left(\frac{d\bm{x}}{dt},{\bm{V}},\hat{V}\right) \in \mathbb{R}$, where $\hat{V}$ denotes a predicted candidate first integral. The reward function is mainly composed of two parts: Levenshtein distance reward $R_L({\bm{V}},\hat{V})$ and
structural and validity penalties $P(\hat{V})$ ($R_L, P \in [0,1]$). The total reward for a generated expression is a composite score, calculated as:

$$
    R\left(\frac{d\bm{x}}{dt},{\bm{V}},\hat{V}\right) = \begin{cases}
        R_{\text{max}} , \frac{d \hat{V} }{dt}\equiv0 \\
        \omega \cdot  R_L  , \   \frac{d \hat{V}}{dt} \not \equiv 0 \\
        -k \cdot  P, \ \hat{V} \text{ is   invalid}
    \end{cases}
$$

where $R_{\text{max}} ,\omega , k > 0$ and $\omega R_L  \in [0,R_{\text{max}}]$.

\paragraph{Levenshtein distance reward $R_L$.} The Levenshtein distance (minimum edit distance) $L$ is calculated between the generated expression and all reference expressions in Polish (prefix) notation. The minimum distance is used. Then the distance $L$ is mapped to the reward component $R_L=\exp(-k_L \cdot L) \ (k_L>0)$.

\paragraph{Structural and validity penalties $P$.} Candidate expressions are considered to be invalid if one of the following conditions is satisfied:
\begin{itemize}
    \item Syntactically invalid. The generated tokens do not form a valid Polish expression. For example, \texttt{+ x y z} has too many operands and cannot be parsed as a single binary expression. In this case, $P=1-\exp(-|E_k| \cdot k_S)$, where $E_k$ is the difference between the operand and the binary operator count minus $1$, since $E_k=0$ is a necessary condition for a valid Polish expression. 
    \item Detected with invalid symbols. For example, the generated sequence contains a variable \texttt{w}, which is outside the predefined vocabulary. In this case, $P=1$.
    \item Contains only one variable or does not contain any variables. In this case, $P=0$.
\end{itemize}

For each sample, the total reward is the maximum reward among all candidate expressions generated by the model.

Because prefix strings are not canonical, the Levenshtein reward may underestimate the quality of an algebraically equivalent expression that is far from the reference string. This limitation affects only the shaping signal. If the expression is a valid first integral, the symbolic verifier assigns $R_{\max}$ regardless of its string distance from the reference.

\subsection{Inference and Verification}

Verifying whether a predicted first integral is correct is straightforward. After simplifying $\frac{d\hat{V}}{dt}= \nabla _{\bm{x}} \hat{V} \cdot \frac{d\bm{x}}{dt}+\frac{\partial \hat{V}}{\partial t}$, the predicted first integral $\hat{V}$ is considered correct if and only if $\frac{d\hat{V}}{dt} \equiv 0$.

During the evaluation process, we employ \textbf{beam search} to generate multiple candidate first integrals. For a given system, we consider the model's prediction correct if at least one of the candidate first integrals generated via beam search is correct.

\section{Experimental Settings}

This section describes our experimental setup, beginning with the construction of three key datasets: \textbf{BwdBase} for general first integral discovery, \textbf{Fwd} for obtaining authentic target-domain examples from a specialized solver, and \textbf{BwdSyn} for scaling these sparse examples into a large training corpus via data synthesis. We detail the training of our Qwen2.5-Math-1.5B base model using LoRA-based supervised fine-tuning and GRPO-based reinforcement learning. Finally, we outline our evaluation framework, which utilizes beam search and symbolic verification of $\frac{dV}{dt} \equiv 0$ to ensure mathematical correctness.

\subsection{Datasets}
\label{sec:datasets}

We focus on first integrals of 2D, first-order, parameter-free ODE systems: $\frac{dx}{dt}=f(x,y,t),\frac{dy}{dt}=g(x,y,t)$. To ensure consistency and simplify structural parsing, expressions use Polish (prefix) notation (e.g., $-1/\sin(x + y - 2) \to$ \texttt{/ -1 sin + + - 0 2 x y}) processed via SymPy~\cite{meurer2017sympy}. This unambiguous representation eliminates the need for parentheses and aligns more effectively with the sequential generation nature of LLMs, reducing the model's cognitive burden in understanding complex operators.

Polish notation is syntactically unambiguous but not canonical: mathematically equivalent expressions may still have different prefix strings due to commutativity, associativity, or algebraic simplification. We therefore do not use string equality as the final correctness criterion. All expressions are converted through SymPy simplification before tokenization when possible, and final correctness is determined by symbolic verification of $\frac{d\hat{V}}{dt}\equiv0$.

In practice, rather than directly inverting the matrix as in Equation (\ref{eqn5}), we treat Equation (\ref{eqn4}) as a system of differential equations of $\frac{d\bm{x}_b}{dt}$ and solve it symbolically using SymPy. Although this approach may occasionally lead to degenerate systems, the resulting dataset is sufficiently rich to enable the model to develop strong capabilities for first integral discovery.

We construct three primary datasets:

The first dataset, \textbf{BwdBase}, serves as the foundation for general first integral discovery. It is generated via backward generation by randomly sampling first integrals to derive differential systems. Following~\cite{lample2019deep}, integrals contain 0--6 operators (arithmetic, exponentiation, elementary functions) with equal probability, using 70\% variables from $\{x, y, t\}$ and 30\% single-digit integers. This dataset comprises 410k samples (400k training, 10k validation). We further generate two test sets using identical generation procedures: \textbf{Normal} (same distribution) and \textbf{Hard} (challenging systems containing variable $t$ and nonlinear operators).

While BwdBase provides broad coverage, certain important system families produce first integrals of considerably greater complexity that fall outside this distribution. The following two datasets are constructed to enable adaptation to such specific complex systems.

The second dataset, \textbf{Fwd}, provides authentic target-domain examples for a specific family of rational second-order ODEs, serving as seed data for synthesis and blending. It is generated using InSyDE~\cite{braz2025new} to solve equations of the form $\frac{d^2x}{dt^2}=\frac{Ax+Bt+C}{Dx+Et+F}$, reformulated as 2D systems $\frac{dx}{dt}=y,\frac{dy}{dt}=\frac{Ax+Bt+C}{Dx+Et+F}$. We generate random systems ($A$--$F$ are single-digit integers) and apply InSyDE to them. InSyDE has three built-in first integral search strategies—Base, S2, and S3—where Base is computationally inexpensive while S2 and S3 employ much more sophisticated algorithmic techniques at higher computational cost. We utilize the S3 strategy to yield the forward dataset Fwd with 7k samples and a corresponding evaluation set \textbf{Fwd-Eval} with 1k samples.

The third dataset, \textbf{BwdSyn}, enables domain-specific adaptation by scaling sparse Fwd examples into a large training corpus. We train a generator on approximately 2k Fwd exemplars to produce 350k synthetic pairs (340k training, 10k validation) via backward generation. Here, we enforce $\frac{dx}{dt}=y$ to simplify inversion and ensure consistency with forward exemplars, while maintaining the correctness guarantees of backward generation.

\subsection{Models and Training}
Our implementation is built on Hugging Face~\cite{lhoest2021datasets} using transformers~\cite{wolf2020transformers} and trl~\cite{vonwerra2022trl}, employing Qwen2.5-Math-1.5B-Instruct~\cite{yang2024qwen2} as the base. During Supervised Fine-Tuning (SFT), we train three models: (1) \textbf{FIGenerator}, a generative model trained on a small subset of Fwd samples to produce synthetic first integrals; (2) \textbf{FISolver-Base}, trained on the BwdBase dataset; and (3) \textbf{FISolver-Syn}, trained on the BwdSyn dataset. We also experiment with blending forward samples into the training set, denoting such models as ``\textbf{FISolver + a\% Fwd},'' where $a$ indicates the proportion of forward data.

\textbf{SFT Training Configuration:} All models are trained on two RTX 2080 GPUs with a per-GPU batch size of 2 and gradient accumulation steps set to 16. We apply LoRA with rank $r=16$, $\alpha=32$, and dropout 0.05, targeting the query, key, value, and output projection layers in the multi-head attention mechanism. We apply a linear learning rate decay and set the weight decay to 0.01. For prediction models, we train for 1 epoch with an initial learning rate of $2 \times 10^{-4}$; for the generative model, we train for 20 epochs with an initial learning rate of $1 \times 10^{-4}$. Total training time is approximately 23 hours.

We subsequently apply RL fine-tuning to the SFT-trained FISolver-Base model using the GRPO algorithm. Two training scenarios are evaluated: (1) 5k randomly sampled examples from BwdBase (\textbf{NormalSplit}, \textbf{NS}), and (2) 1k examples selected to match the \textbf{Hard} distribution (\textbf{HardSplit}, \textbf{HS}). Models fine-tuned with RL are denoted with ``\textbf{+ RL}''.

We design two reward schemes to validate the utility of incorporating the Levenshtein Distance (LD) as a shaping reward:

\begin{itemize}
\item Binary Reward (Bin): A sparse reward where $\omega = 0$; the model receives a large positive reward only when it predicts a fully correct first integral. Otherwise, it receives no positive reward.
\item LD-based Reward (LD): A shaped reward where $\omega = 50$; the model receives a continuous reward proportional to the negative Levenshtein Distance between the prediction and the ground truth, encouraging incremental progress.
\end{itemize}

Both reward functions are set with $R_{\text{max}}=100$, $k_L=0.1$, and $k_S=0.01$. For models trained with NS and HS, the suffixes ``\textbf{(NS)}'' and ``\textbf{(HS)}'' are added, respectively. For models trained with the two different reward functions, the model names are suffixed with ``\textbf{(LD)}'' and ``\textbf{(Bin)}''.

\textbf{RL Fine-tuning Configuration:} Training occurs on a single RTX 2080 GPU. We maintain the same LoRA configuration as in SFT, with learning rate $3 \times 10^{-6}$, $\beta=0.01$, $\epsilon=0.02$, per-device batch size 1, and gradient accumulation steps 8, with 1 iteration per epoch. Training requires approximately 4 hours per 1k samples.

\subsection{Inference and Verification}
We evaluate using beam sizes of 5 or 70. For beam size 70, a 300-second timeout per expression prevents resource exhaustion; timeouts or memory overflows are recorded as failures.

\section{Results}

\begin{table*}[t] 
    \centering
    \resizebox{\textwidth}{!}{
    \begin{tabular}{lrrrrrrrrrrrrr}
        \toprule
        \multirow{3}{*}{Test Sets} 
        & \multicolumn{7}{c}{bs=5} 
        & \multicolumn{5}{c}{bs=70} 
        & \multirow{3}{*}{Mathematica} \\
        \cmidrule(lr){2-8} \cmidrule(lr){9-13}
        
        & \multicolumn{5}{c}{FISolver-Base} 
        & \multicolumn{2}{c}{Other LLMs} 
        & \multicolumn{5}{c}{FISolver-Base} 
        & \\
        \cmidrule(lr){2-6} \cmidrule(lr){7-8} \cmidrule(lr){9-13}
        
        & SFT-only 
        & \shortstack{+ RL(NS)\\(LD)} 
        & \shortstack{+ RL(NS)\\(Bin)} 
        & \shortstack{+ RL(HS)\\(LD)} 
        & \shortstack{+ RL(HS)\\(Bin)} 
        & \shortstack{Qwen2.5-\\Math-1.5B} 
        & \shortstack{Qwen2.5-\\Math-7B} 
        & SFT-only 
        & \shortstack{+ RL(NS)\\(LD)} 
        & \shortstack{+ RL(NS)\\(Bin)} 
        & \shortstack{+ RL(HS)\\(LD)} 
        & \shortstack{+ RL(HS)\\(Bin)} 
        & \\
        \midrule
        
        Normal 
        & {43.2} 
        & \textbf{49.0} 
        & 45.1 
        & 41.8 
        & 37.0 
        & 12.9 
        & 36.6 
        & 72.7 
        & 74.2 
        & 49.5 
        & \textbf{74.7} 
        & 71.1 
        & 74.0 \\
        
        Hard   
        & {26.1} 
        & 31.0 
        & 29.3 
        & \textbf{33.5} 
        & 26.0 
        & 2.4  
        & 2.6 
        & 62.2 
        & 61.5 
        & 59.9 
        & \textbf{63.7} 
        & 63.3 
        & 23.3 \\
        
        \bottomrule
    \end{tabular}
    }
    \caption{Main results on first integral discovery. We compare FISolver-Base (in both SFT-only and RL-enhanced versions) against mathematical LLMs and the commercial solver Mathematica, and the table shows the accuracy (\%) of first integral prediction on two test sets: Normal and Hard.
    Our results demonstrate that: (1) SFT on backward-generated data significantly elevates base performance; (2) increasing beam size (bs) further unlocks the model's potential; (3) Guided RL with Levenshtein Distance (LD) reward provides a consistent boost over binary rewards (Bin); and (4) specialized RL training on different data distributions (NS: NormalSplit, HS: HardSplit) further optimizes performance for respective complexities. Notably, FISolver surpasses Mathematica on the Hard set by a wide margin and achieves research-level performance with a compact 1.5B parameter size.}
    \label{mainTable}
\end{table*}

We evaluate FISolver against professional mathematical LLMs and commercial solvers like Mathematica. Our results show that FISolver significantly outperforms these baselines, particularly on complex nonlinear systems where commercial tools struggle. Furthermore, we demonstrate that Levenshtein-based reward shaping in RL and dataset blending techniques are critical for achieving research-level accuracy and effective domain adaptation.

\subsection{Baseline Comparison}

We evaluate the performance of FISolver through a comprehensive comparison with baseline models, commercial solvers, and large-scale state-of-the-art LLMs. The results, summarized in Table~\ref{mainTable}, Table~\ref{deepseek}, and Figure~\ref{fig:main_results_bar}, demonstrate the efficacy of our proposed pipeline.

\subsubsection{Fundamental Performance Improvement}

We first compare our model FISolver-Base against its pre-fine-tuned version (Qwen2.5-Math-1.5B-Instruct) and Qwen2.5-Math-7B-Instruct, a larger mathematical LLM of the same family. To ensure a fair comparison, we fix the beam size (bs) to 5 for all models. The results are presented in Table~\ref{mainTable}.

Under this setting, our fine-tuned model achieves prediction accuracies of 43.2\% and 26.1\% on the Normal and Hard test sets, respectively, significantly outperforming the original pre-fine-tuned model (12.9\% on Normal and 2.4\% on Hard). Moreover, our model surpasses the larger mathematics-specific LLM Qwen2.5-Math-7B-Instruct under the same beam size. These results indicate that fine-tuning a small number of parameters can substantially enhance the model's ability to solve first integrals—even exceeding the performance of larger mathematical models.

\subsubsection{Incremental Gain from Guided RL}

We use RL to optimize expression-level correctness rather than next-token likelihood. Table~\ref{mainTable} compares the SFT checkpoint with RL checkpoints trained under binary and LD-shaped rewards.

As shown in Table~\ref{mainTable}, using the LD-based reward leads to a notable improvement in accuracy: +5.8\% on the Normal set and +7.4\% on the Hard set when beam size is 5. In contrast, the binary reward brings only marginal gains. This suggests that the sparse binary signal provides insufficient learning guidance due to high variance, while the Levenshtein Distance offers a smoother, more informative learning signal that progressively guides the model toward the correct expression.
The disparity becomes more pronounced at beam size 70: +RL(NS)(Bin) achieves only 49.5\% on Normal, far below the SFT-only baseline of 72.7\%, whereas all other variants benefit substantially from the larger beam. We attribute this to output-distribution collapse induced by the sparse binary reward on the relatively homogeneous NormalSplit data---the model concentrates probability mass on a narrow set of expression patterns, causing the 70 beams to largely duplicate one another and negating the diversity advantage of beam search. The LD-shaped reward avoids this pathology by providing continuous gradient signal even for near-miss predictions, thereby preserving output diversity.

These results indicate that cross-entropy training alone does not fully exploit the model's potential. With a suitably designed reward function, RL fine-tuning can significantly deepen the model's understanding of first integrals and lead to measurable gains in solving accuracy. A concrete illustration is provided in Example~3 of \cref{sec:case_study}, where the SFT-only model identifies the key component $\log(x-y)$ but fails to correctly combine it with the time variable $t$; after RL with the LD-based reward, the model discovers the valid first integral $\log(x-y)/t$.

\subsubsection{Comparison with Commercial Solvers}

Given the absence of commercial software specialized in discovering general first integrals, we employ the \texttt{DSolve} function from Mathematica~\cite{Mathematica143} as our commercial baseline. While \texttt{DSolve} is primarily designed for solving systems of differential equations rather than directly finding first integrals, obtaining a general solution for a two-dimensional system is equivalent to identifying its first integrals (as established in Theorem \ref{theorem2}). 

On the Normal test set, Mathematica demonstrates robust performance with 74.0\% accuracy, reflecting its strength in handling standard symbolic problems. Remarkably, our FISolver+RL(HS)(LD) model (with beam size 70) achieves 74.7\% accuracy, slightly surpassing the commercial state-of-the-art even on this standard distribution. On this test set, FISolver and Mathematica achieve comparable performance. This is expected because Normal contains relatively simple systems for which deterministic symbolic routines are effective. However, Mathematica is designed primarily for solving differential systems rather than directly exposing first integrals; when it succeeds, its output is often a lengthy implicit or general solution from which the underlying conservation relation is not easy to read.

The contrast becomes starker on the Hard test set, which involves complex nonlinearities. Mathematica's performance drops significantly to 23.3\%, whereas our model maintains a high accuracy of 63.7\%. This demonstrates that commercial solvers rely on rigid algorithmic rules, whereas our data-driven approach generalizes better to challenging, non-standard systems.

As an analytical solver, Mathematica is faster when successful but can hang indefinitely on non-standard systems. Conversely, FISolver guarantees results within a predictable timeframe defined by its beam size. For large-scale automated discovery, this robustness and guaranteed completion outweigh the specialized speed of solvers with potentially unbounded execution times.

\subsubsection{Research-Level Evaluation}

 To assess whether our model reaches research-level performance, we compare our model FISolver-Base + RL(HS)(LD) with DeepSeek-V3.2-Exp~\cite{deepseekai2024deepseekv32} via its API—a model recognized for strong performance on research-grade mathematical problems (Table~\ref{deepseek}). To simulate real-world research conditions, we maximize our model's performance under reasonable time and computational budgets by using a beam size of 70 on the Hard dataset. Due to cost constraints, we compare our model with DeepSeek using a beam size of 1 for the latter, as it employs chain-of-thought reasoning leading to long responses. To make this limitation explicit, we report the average token cost per response and estimate the cost of 70 independent samples. Running 70 DeepSeek samples would require approximately $70\times2869.6$ output tokens per instance, which is outside our evaluation budget and would not match the local predict-and-verify setting. Since our model runs locally and DeepSeek is accessed via API, direct time comparison is infeasible; instead, we compare the average number of tokens consumed per sample as a proxy for efficiency.

 As shown in Table~\ref{deepseek}, despite DeepSeek's vastly larger parameter count (684.7B vs. 1.5B), its accuracy (42.8\%) is significantly lower than our model's 63.7\%. Additionally, the average token count per sample for DeepSeek is 2869.6, while our model generates only 23.0 tokens per candidate. Even with a beam size of 70, the total average tokens per sample (1611.8) is still lower than DeepSeek's single candidate. In many failed cases, DeepSeek either performs lengthy chain-of-thought reasoning without reaching a verifiable first integral and eventually exceeds the output-length limit, or terminates with an unsupported final expression that does not satisfy $\frac{dV}{dt}\equiv0$. This result confirms that our model achieves research-level performance in first integral exploration with a minimal parameter budget, and demonstrates that in problems where answer verification guarantees correctness, our Predict-and-Verify paradigm is more efficient than chain-of-thought reasoning.

\subsection{Adaptability to Specific Complex Systems}

\begin{table}[t] 
    \centering
    \begin{tabular}{lrr} 
        \toprule
        & Ours & DeepSeek-V3.2-Exp \\
        \midrule
        Accuracy (\%)          & \textbf{63.7} & 42.8 \\
        Parameter Scale       & {1.5B}          & 684.7B \\
        Beam Size             & 70            & 1    \\
        \addlinespace 
        Tokens Per Sample     & {1611.8}        & 2869.6 \\
                              & {\small (23.0 $\times$ 70)} & \\
        \bottomrule
    \end{tabular}
    \caption{Comparison between our FISolver-Base + RL(HS)(LD) model and DeepSeek-V3.2-Exp on the Hard test set. Despite significantly lower computational cost, our model achieves substantially higher accuracy, demonstrating that our approach reaches research-level mathematical reasoning capability.}
    \label{deepseek}
\end{table}

This subsection demonstrates that, through targeted training strategies, our model can develop strong proficiency in solving specific classes of complex structured systems. These results indicate that our model can be effectively deployed to tackle first integrals of specialized complex systems with significant research value.

The Fwd-Eval test set generated by InSyDE with strategy S3 contains first integrals of significantly greater complexity compared to those in the Normal and Hard sets. While BwdBase integrals contain at most 6 operators, the first integrals in Fwd-Eval often involve dozens of operators with deeply nested compositions of elementary functions, resulting in substantially longer expressions. FISolver-Base, trained primarily on relatively concise first integrals, is not well-suited for this specialized domain. To bridge this gap, we conducted the following experiment using our data synthesis and blending approaches (\cref{sec:synthesis,sec:blending}).

We first train the prediction model solely on the BwdSyn dataset, then incrementally blend in Fwd samples and evaluate prediction accuracy. As shown in Table~\ref{InSyDE}, our model achieves 28.4\% accuracy using only synthetic data—surpassing both Base (9.8\%) and S2 (22.6\%) strategies of InSyDE. As more forward samples are added, accuracy further improves, reaching 35.4\% when 2\% forward samples (7,000 examples) are included. Since the Fwd-Eval test set was constructed using cases successfully solved by the S3 strategy, its 100\% accuracy serves as a validation of the data generation integrity and represents the performance ceiling for other methods on this specific family of systems.

These results support two key conclusions: (1) with only a small set of true (differential equations, first integrals) pairs, our approach can effectively overcome the OOD challenge and adapt to specialized classes of complex systems; (2) both proposed training strategies, synthetic data generation and forward sample blending, are empirically validated to be highly effective. A representative case is shown in Example~4 of \cref{sec:case_study}, where FISolver-Syn discovers a five-term first integral with a non-trivial fractional coefficient that InSyDE's Base and S2 strategies cannot find.

\begin{table}[t] 
    \centering
    \begin{tabular}{lr} 
        \toprule
        Methods & Accuracy (\%) \\
        \midrule
        FISolver-Syn            & {28.4} \\
        FISolver-Syn + 0.5\%Fwd & 30.5 \\
        FISolver-Syn + 1.0\%Fwd & 31.1 \\
        FISolver-Syn + 2.0\%Fwd & \textbf{35.4} \\
        \addlinespace[0.5em] 
        InSyDE-Base             & 9.8  \\
        InSyDE-S2               & 22.6 \\
        \bottomrule
    \end{tabular}
    \caption{Comparison with InSyDE on the Fwd-Eval test set. FISolver-Syn trained on BwdSyn substantially outperforms InSyDE's Base and S2 strategies. Moreover, as the proportion of forward samples (Fwd) in the training set increases, our model's accuracy continues to improve. Beam size is 70.}
    \label{InSyDE}
\end{table}

\subsection{Illustrative Examples}
\label{sec:case_study}

To further examine the behavior of FISolver beyond aggregate accuracy, we select several representative examples for detailed analysis. These cases illustrate four complementary aspects of the model: its ability to recover physically meaningful invariants, its capacity to discover conservation laws involving time-dependent nonlinear structures that frontier reasoning models fail to identify, the way LD-based reinforcement learning can correct typical failed predictions, and its effective domain adaptation to specialized system families where dedicated solvers struggle.

\noindent\textbf{Example 1: Stability analysis via energy conservation.}
In classical mechanics and control theory, determining whether a nonlinear system possesses a conserved quantity is a standard task for qualitative analysis: such an invariant constrains trajectories to its level sets, reducing the effective dimension of the dynamics without solving the ODE~\cite{arnold1992ordinary}. Consider the following time-dependent system:
\[
\begin{cases}
\dot{x}=\dfrac{xy+2y}{2t x^2+4tx+ty}\,, \\[6pt]
\dot{y}=\dfrac{-2x^2y-4xy}{2t x^2+4tx+ty}\,.
\end{cases}
\]
We feed its Polish-notation encoding to FISolver:

\begin{inputbox}
\texttt{\small [/ + * x y * 2 y + + * 2 * t \textasciicircum{} x 2 * 4 * t x * t y ,
/ + * -2 * \textasciicircum{} x 2 y * -4 * x y + + * 2 * t \textasciicircum{} x 2 * 4 * t x * t y]}
\end{inputbox}
\begin{outputbox}[FISolver-Base, bs=70]
\texttt{\small [+ \textasciicircum{} x 2 y]}
\end{outputbox}

Converting the output back to standard notation yields $\hat V=x^2+y$, verified by $\frac{dV}{dt}=2x\dot{x}+\dot{y}=0$. Once identified, trajectories are constrained to the parabolic level sets $x^2+y=C$, reducing the two-dimensional dynamics to motion along one-dimensional curves and providing direct qualitative insight into the system's long-term behavior. This case shows that FISolver can output interpretable symbolic knowledge rather than merely optimizing a black-box metric.

\noindent\textbf{Example 2: Exponentially weighted conservation in population dynamics.}
In mathematical ecology, conserved quantities involving exponential weights arise in compartmental models where different populations experience distinct time-dependent rates~\cite{murray2002mathematical}. Such exponentially weighted sums recall the balance laws that govern redistribution across compartments in ecological and epidemiological settings. Consider:
\[
\begin{cases}
\dot{x}=\dfrac{ye^t(1-t)(x^2+1)}{t(x^2+1)+e^t}\,, \\[6pt]
\dot{y}=\dfrac{-y(x^2+1+e^t)}{t(x^2+1)+e^t}\,.
\end{cases}
\]

\begin{inputbox}
\texttt{\small [- - + / * y exp t + + t * t \textasciicircum{} x 2 exp t / * * y \textasciicircum{} x 2 exp t + + t * t \textasciicircum{} x 2 exp t / * * t y exp t + + t * t \textasciicircum{} x 2 exp t / * * * t y \textasciicircum{} x 2 exp t + + t * t \textasciicircum{} x 2 exp t , - - - 0 / y + + t * t \textasciicircum{} x 2 exp t / * y \textasciicircum{} x 2 + + t * t \textasciicircum{} x 2 exp t / * y exp t + + t * t \textasciicircum{} x 2 exp t]}
\end{inputbox}
\begin{outputbox}[FISolver-Base + RL(HS)(LD), bs=70]
\texttt{\small [+ x * y exp t]}
\end{outputbox}

Converting the output yields $\hat V=x+ye^t$, verified by $\frac{dV}{dt}=\dot{x}+\dot{y}e^t+ye^t\equiv 0$. The invariant states that the ``effective total''---where the second variable $y$ is weighted by the exponentially growing factor $e^t$---is conserved. This structure is analogous to compartmental balance laws in mathematical ecology~\cite{murray2002mathematical}: changes in $y$, amplified by the exponential weight $e^t$, are exactly compensated by changes in $x$, keeping the weighted total constant. Notably, when presented with this system, DeepSeek-V3.2-Exp failed to produce any candidate first integral despite extended chain-of-thought reasoning, whereas FISolver identified the compact invariant directly. This case demonstrates that FISolver can discover physically interpretable conservation laws involving time-dependent exponential structures that elude even frontier reasoning models.

\noindent\textbf{Example 3: RL-guided correction for non-autonomous systems.}
Logarithmic first integrals arise naturally in mathematical ecology and chemical kinetics. The celebrated Lotka--Volterra predator--prey model, for instance, possesses a conserved quantity of the form $\delta x - \gamma\log x + \beta y - \alpha\log y$~\cite{murray2002mathematical}. When such systems include time-varying coefficients, the resulting non-autonomous ODEs produce time-dependent invariants like $\log(\cdot)/t$ that are substantially harder to discover. Consider:
\[
\begin{cases}
\dot{x}=\dfrac{2(x-y)\log(x-y)}{2tx+3t}\,, \\[6pt]
\dot{y}=\dfrac{(-2x^2+2xy-x+y)\log(x-y)}{2tx+3t}\,.
\end{cases}
\]
The ground-truth first integrals are $V_1=x^2+x+2y$ and $V_2=\frac{\log(x-y)}{t}$.

\begin{inputbox}
\texttt{\small [/ * 2 * - x y log - x y + * 2 * t x * 3 t ,
/ * + + + * -2 \textasciicircum{} x 2 * 2 * x y * -1 x y log - x y + * 2 * t x * 3 t]}
\end{inputbox}
\begin{errorbox}[FISolver-Base]
\texttt{\small [* t log - x y] ,\  [* \textasciicircum{} t 2 log - x y] ,\  [* x log - x y] ,\  $\ldots$}

\smallskip
{\scriptsize\color{gray} Converting these candidates: $t\log(x-y)$,\; $t^{2}\log(x-y)$,\; $x\log(x-y)$,\; $\ldots$ None satisfies $\frac{dV}{dt}\equiv0$.}
\end{errorbox}
\begin{outputbox}[FISolver-Base + RL(HS)(LD), bs=70]
\texttt{\small [/ log - x y t]}
\end{outputbox}

Converting the corrected output yields $\hat V=\frac{\log(x-y)}{t}$, verified by $\frac{d\hat V}{dt}\equiv0$. The SFT-only model had identified the key component $\log(x-y)$ but failed to correctly handle its relation with the time variable $t$. After RL with the LD-based reward (\cref{sec:rl}), the model successfully reorganizes these symbolic fragments into a globally valid expression. This case provides qualitative evidence that the shaped reward guides the model beyond local plausibility toward symbolic correctness, explaining the gains over binary reward in Table~\ref{mainTable}.

\noindent\textbf{Example 4: Generalized energy conservation in a traveling-wave system.}
When the forcing of a non-autonomous ODE depends on $t$ and $x$ only through a linear combination $\xi=t+x$, the system possesses a translational symmetry $(t,x)\to(t-a,x+a)$ that reduces the effective number of independent variables. Consider the rational second-order ODE $\ddot{x}=\frac{9t+9x+7}{2t+2x+5}$ reformulated as a first-order system:
\[
\begin{cases}
\dot{x}=y\,, \\[6pt]
\dot{y}=\dfrac{9t+9x+7}{2t+2x+5}\,.
\end{cases}
\]

\begin{inputbox}
\texttt{\small [y , / + + 7 * 9 t * 9 x + + 5 * 2 t * 2 x]}
\end{inputbox}
\begin{outputbox}[FISolver-Syn + 2\%Fwd, bs=70]
\texttt{\small [+ + + + y * 1/2 \textasciicircum{} y 2 * - 0 9/2 t * - 0 9/2 x * 31/4 log + + 5 * 2 t * 2 x]}
\end{outputbox}

Converting the output yields the five-term first integral
$\hat V = \tfrac{1}{2}y^2 + y - \tfrac{9}{2}(t+x) + \tfrac{31}{4}\log(2t+2x+5)$,
verified by $\frac{dV}{dt}\equiv 0$. Introducing the traveling coordinate $\xi=t+x$ with velocity $\dot{\xi}=1+y$, the first two terms can be rewritten as $\frac{1}{2}y^2+y=\frac{1}{2}(1+y)^2-\frac{1}{2}=\frac{1}{2}\dot{\xi}^2-\frac{1}{2}$, revealing that the invariant is an energy conservation law $\frac{1}{2}\dot{\xi}^2+U(\xi)=E$ in the $\xi$ coordinate, where the potential $U(\xi)=-\frac{9}{2}\xi+\frac{31}{4}\log(2\xi+5)$ has a singularity at $\xi=-\frac{5}{2}$ that acts as an impassable barrier dividing the phase space into disconnected invariant regions. This system belongs to the Fwd-Eval test set of rational second-order ODEs (\cref{sec:datasets}). InSyDE's~\cite{braz2025new} Base and S2 search strategies, both specifically designed for this class of equations, fail on this instance (Table~\ref{InSyDE}); only the computationally expensive S3 strategy succeeds. In contrast, FISolver-Syn, trained via our data synthesis and blending pipeline with only ${\sim}$2k seed examples (\cref{sec:synthesis,sec:blending}), identifies the correct expression including the non-trivial fractional coefficient $\frac{31}{4}$. This case demonstrates effective domain adaptation: a compact LLM-based solver can match the discovery capability of a specialized algorithmic tool while requiring substantially less computation.

\section{Discussion}
\label{sec:discussion}

We leverage LLMs to address the difficult and significant mathematical challenge of finding first integrals for differential equations. Our model, using the proposed data generation and fine-tuning methods, achieves research-level solving capability with minimal computational resources, successfully discovering first integrals for complex systems lacking general solutions.

The Backward Generation method is our central innovation, solving the data scarcity problem and enabling supervised learning for first integral discovery.

Our methodology is broadly applicable to first integral discovery and other mathematical problems. Key techniques include: (1) LoRA fine-tuning on existing mathematical LLMs for efficiency; (2) Data synthesis and mixing with forward-generated data to enrich the training set from a small number of samples; (3) Adopting a ``Predict-and-Verify'' solving paradigm; and (4) Employing Reinforcement Learning with Levenshtein Distance to enhance solving accuracy. These domain-agnostic methods offer a universal approach for tackling other hard mathematical problems.

A common challenge for ``Backward Generation'' is Out-of- \allowbreak Distribution (OOD) generalization. We overcome the base model's limitations for specific first integral types through data synthesis and dataset blending, allowing the model to adapt to new systems with minimal correct samples.

This work has several limitations. First, our evaluation focuses on 2D first-order, parameter-free systems. Although this setting includes scalar second-order ODEs and important benchmark families, higher-dimensional and parameterized systems introduce larger symbolic search spaces and may require larger models or stronger constrained decoding. Second, Polish notation is syntactically convenient but not canonical; algebraically equivalent first integrals can have different token sequences, which may reduce the fidelity of Levenshtein-based shaping. Third, we do not evaluate general-purpose LLM baselines with beam size 70 because large-beam decoding over long autoregressive outputs is computationally and financially costly.

The implications of this research are twofold. Practically, it establishes the feasibility of combining symbolic computation with data-driven models to discover first integrals, impacting mathematics, physics, and engineering. In downstream applications, discovered first integrals directly enable stability certification via Lyapunov-like analysis, dimensionality reduction of complex dynamical systems, and the identification of hidden conservation laws that inform control design and physical model validation. For instance, the energy-like invariant in Example~1 (\cref{sec:case_study}) immediately confines trajectories to one-dimensional level sets, a result that would otherwise require extensive analytical effort. Methodologically, our work highlights LLMs' potential to develop a ``super-intuition'' for bypassing tedious symbolic derivations, freeing researchers for higher-level conceptual work. We anticipate this research will bridge machine learning and symbolic mathematics, fostering interdisciplinary tools and advancing automated mathematical discovery toward greater reliability and applicability.

\section*{GenAI Usage Disclosure}
The authors used generative AI tools for language editing and drafting assistance. All technical claims, mathematical derivations, experimental results, data, code, and citations were verified by the authors.

\bibliographystyle{unsrt}
\bibliography{references}

\end{document}